\theoremstyle{definition}
\newtheorem{corollary}{Corollary}
\newtheorem{theorem}{Theorem}
\newtheorem{example}{Example}
\newcommand{\judge}{\ensuremath{j} }
\newcommand{\judges}{\ensuremath{J} }
\newcommand{\reps}{\ensuremath{E} }
\newcommand{\rep}{\ensuremath{e} }
\newcommand{\vote}{\ensuremath{v} }
\title{Distributed Weighting of Experts}
\author{Ben Abramowitz}
\affiliation{
  \institution{Tulane University}
  \city{New Orleans}
  \country{USA}%
}
\author{Nicholas Mattei}
\affiliation{
  \institution{Tulane University}
  \city{New Orleans}
  \country{USA}%
}
\begin{abstract}
    Aggregating signals from a collection of noisy sources is a fundamental problem in many domains including crowd-sourcing, multi-agent planning, sensor networks, signal processing, voting, ensemble learning, and federated learning. The core question is how to aggregate signals from multiple sources (e.g. experts) in order to reveal an underlying ground truth. While a full answer depends on the type of signal, correlation of signals, and desired output, a problem common to all of these applications is that of differentiating sources based on their quality and weighting them accordingly.
    It is often assumed that this differentiation and aggregation is done by a \emph{single}, \emph{accurate} central mechanism or agent (e.g. judge). We complicate this model in two ways. First, we investigate the setting with both a single judge, and one with multiple judges. Second, given this multi-agent interaction of judges, we investigate various constraints on the judges' reporting space.
    We build on known results for the optimal weighting of experts and prove that an ensemble of sub-optimal mechanisms can perform optimally under certain conditions. We then show empirically that the ensemble approximates the performance of the optimal mechanism under a broader range of conditions.
\end{abstract}
\begin{document}
\maketitle

\section{Introduction}
Aggregating noisy information from a group of agents or algorithms into a label or decision is a fundamental problem across many fields. Take the examples of crowd-sourcing image labels for training supervised learning models~\cite{quoc2013evaluation}, fusing conflicting sensor data~\cite{pau1988sensor}, ensemble methods in machine learning~\cite{dietterich2000ensemble}, interactive democracy~\cite{brill2018interactive}, peer review~\cite{lev2021peer}, and even guessing the weight of an ox~\cite{surowiecki2005wisdom}. In each situation there is some underlying ground truth, i.e., the weight of the ox or whether the image contains a tiger. In all these settings we wish to combine a number of weak signals into a single strong signal or decision.
In the simplest cases, all information sources are treated equally, e.g. anonymous voting or uniform weighting of image labels, and aggregation methods depend on some basic notion of centrality, e.g. the mean or median. However, when one can assess the quality or reliability of a signal or its source, significant improvement becomes possible.

For example, in a simplistic model of academic peer review, a conference chair (judge) must determine whether to accept or reject papers without reading them based on the accept/reject statements from reviewers (experts). The chair may reasonably give higher weight to the reports of reviewers who indicate greater expertise \cite{lev2021peer}. Of course, the chair may be inaccurate in how competent they believe each of the reviewers to be.

We base our investigation on the literature on weighting experts in both the offline~\cite{shapley1984optimizing,nitzan1982optimal} and online settings~\cite{cesa1997use,vovk1990aggregating,freeman2020no}, though in this work we restrict our focus to a single decision. A set of independent \emph{experts} (e.g. sensors, agents, or algorithms) seeks to determine a binary ground truth. Each expert has a certain \emph{competence}, or probability of being correct. Each expert can provide a single bit of information (e.g. True or False), but the experts cannot communicate otherwise. If nothing is known about the experts and their competences, and nothing additional is known about the ground truth, the only reasonable way to aggregate these bits is by a majority vote~\cite{may1952set}. As the number of experts increases, as long as they are sufficiently competent, e.g. all competences $> 0.51$, the Condorcet Jury Theorem says the probability of majority voting aggregating correctly tends to one~\cite{berend1998condorcet}.
However, when there are only a few experts the asymptotic behavior is not meaningful, and when enough of the experts are incompetent, e.g. have competence $\leq 0.50$, the theorem no longer holds. Moreover, when the competences of the experts are known, majority rule becomes sub-optimal \cite{berend1998condorcet}.

Fortunately, the optimal aggregation method for maximizing accuracy with any number of independent experts, with any competences, is straightforward~\cite{shapley1984optimizing,nitzan1982optimal}. The optimal method is to give each expert a weight equal to the log-odds of their competence, and then take a weighted majority vote. At first this method would appear to require that the competences of the experts be known. Currently, the only known method of assigning experts their optimal weights is for a central authority, who knows the exact competences, to compute and assign the proper weights. One of our main contributions, detailed in Section \ref{section:optimal}, is a proof that no central authority is required. With multiple judges, no single judge needs to know either the ground truth or the true competence of any of the experts. Just as the experts' votes can be aggregated to achieve higher accuracy than any of the experts individually~\cite{grofman1978judgmental,feld1984accuracy}, aggregating weights from an ensemble of \emph{judges} can be better than any one individually, and under certain conditions achieves the optimal weighting.

Consider an autonomous system with two kinds of sensors. Both sensor types take regular measurements of the same kind (e.g. path obstructed or unobstructed). The first type of sensor is cheap, takes multiple measurements each second, and can transmit a single bit every second, but accuracy is highly variable across sensors. The second type of sensor is more costly, takes a measurement every few seconds, and is more reliable, but can only receive and transmit a few bits each minute. If decisions must be made quickly, the second sensor might seem useless. However, if these slower sensors can be used to judge the accuracy of the faster sensors at regular intervals, the overall accuracy of the entire ensemble may be improved. The same intuition applies to the use of learning algorithms and approximation algorithms that require different amounts of time to compute in time-sensitive applications. More reliable algorithms can be used to evaluate ensembles of faster, less accurate algorithms over time, a technique used in many ensemble solvers for hard computational problems \cite{thornton2013auto}.

Our approach of decentralizing the weighting of experts is inspired by work in ``wisdom of the crowds" and crowdsourcing~\cite{surowiecki2005wisdom,brabham2013crowdsourcing}, proxy voting~\cite{abramowitz2019flexible} and truth-tracking in Liquid Democracy~\cite{zhang2021tracking,becker2021can,becker2021unveiling}.\footnote{See ~\cite{paulin2020overview} for an overview of Liquid Democracy.}
For human agents, it is often more natural for them to assign weights or scores to the experts rather than to report probabilities as estimates of the experts' competences.
We wish to study multi-agent learning models with low communication complexity that are appropriate for human and computational agents alike. Hence, the judges in our model only provide real-valued weights for each expert.

We must also address the impractical nature of the optimal weighting rule. 
The optimal weights are negative for experts whose competence is below $0.5$. In voting, it can be unnatural to allow negative weights, especially since any expert who knows their weight is negative might reverse their vote. Many papers on voting and variants of the Condorcet Jury Theorem assume all experts have competence $> 0.5$, but we do not make this assumption. Rather, we consider the impact on accuracy when weights are required to be non-negative. This effectively removes experts whose weights would be negative rather than negating their votes.
Lastly, the optimal weights can be arbitrarily large (small) when competences approach 1.0 (0.0). In the multi-judge setting, this means that a single judge may dominate any aggregated set of weights. In practice, it may be necessary to assume the weights are in some finite range. We therefore consider the impact on accuracy when the weights judges assign are normalized so that they sum to 1.0 for each judge. This is the equivalent of ``one-person-one-vote" for the judges. As with the experts, if nothing is known about the quality of each judge, treating them all equally may be most reasonable.

\section{Model and Notation}
In our model there are two disjoint sets of agents -- judges and experts. 
Let \reps be a set of $m$ experts and \judges be a set of $n$ judges.
The experts vote on a single binary issue in which there is only one right answer. Without loss of generality, the alternatives are represented by $\{1,0\}$ where $1$ is correct and $0$ is incorrect. 
Each expert $\rep \in \reps$ has a \emph{competence}, or probability $p_\rep$ of voting correctly, independent of all other experts.
We associate each expert's index with their vote, so expert $\rep \in \reps$ casts a vote $\vote_\rep \in \{1,0\}$ with competence $p_\rep = P(\vote_\rep = 1)$. We assume that for every $\rep \in \reps$ the vote of each expert is independent from all other experts.
The \emph{odds} of an expert voting correctly are hence $\frac{p_\rep}{1-p_\rep}$, and their \emph{log-odds} are $\log \left( \frac{p_\rep}{1-p_\rep} \right)$.
For simplicity (and realism), we assume that $0 < p_\rep < 1$ for the experts, meaning that no expert is either always correct or always incorrect.

\paragraph{Weighted Majority Rules for Aggregating Expert Votes}
A weighted majority rule gives each expert a weight $w_\rep \in \mathbb{R}$ and elects $1$ as the winner if $\sum\limits_{\vote_\rep = 1} w_\rep > \sum\limits_{\vote_\rep = 0} w_\rep$, elects $0$ as the winner if $\sum\limits_{\vote_\rep = 1} w_\rep < \sum\limits_{\vote_\rep = 0} w_\rep$, and uses a tie-breaking rule (e.g. coin flip) for the edge case where these sums are equal.
Note that if all experts' weights are scaled up or down by some constant factor, the rule does not change.

\paragraph{Optimal Weighting via the Log-Odds Rule}
The optimal voting rule, which maximizes the probability of the vote outcome being correct, is known to be a particular weighted majority rule that we refer to as the \emph{log-odds rule}~\cite{shapley1984optimizing,nitzan1982optimal}.
Given a vector of competences $\vec{p} = (p_1, \ldots, p_m)$, the log-odds rule assigns each expert $\rep \in \reps$ a weight $w_\rep^*$ equal to their log-odds: $w_\rep^* = \log \left( \frac{p_\rep}{1 - p_\rep} \right)$.
When $\vec{p}$ represents the true competences of the experts, the log-odds rule is optimal.
This optimality result and the nature of the binary choice motivates us to restrict our attention to weighted majority rules. Our central concern is how to assign weights to the experts based on estimates of their competences.

\paragraph{Judges' Estimates of Expert Competences}
In our model, the true competences of the experts are unknown. In order to derive the true competences we would need to assume access to the ground truth outcome, which is never revealed in our setting. Note that this is in contrast to the standard setup in online learning where the ground truth is revealed at each time step \cite{cesa1997use}. 

Any judge $\judge \in \judges$ estimating the competences of the experts is biased due to their own imperfection ($p_\judge < 1$). Every judge's competence is independent of the other judges and experts. A judge estimates an experts' competence based on how often they expect to agree. A judge with competence $p_\judge \in [0,1]$ therefore estimates the competence of expert $\rep$ as $p_{\judge\rep} = p_\judge \cdot p_\rep + (1-p_\judge)(1-p_\rep)$.

\paragraph{Aggregating Scores Into Expert Weights}
Each judge gives a score to each expert, and the scores an expert receives from the judges are then aggregated to give that expert a weight.
We assume that judges try to implement the optimal rule, assigning scores according to the log-odds rule using their perceived competences of the experts.
Hence, each judge assigns each expert a score of $w_{\judge\rep} = \log(\frac{p_{\judge\rep}}{1-p_{\judge\rep}})$.
In our model, when there are $n$ judges, the weight of an expert becomes the mean of the scores assigned to them: $w_\rep = \frac{1}{n} \sum_\judge w_{\judge\rep}$.
%
% Note that since $p_{\judge\rep} \leq p_\rep$, all weights are biased downward. <-- Not true! Only true if competences are all above 0.5

\section{Related Work}
While we focus on results for a single decision, our work is intended to be a contribution towards \emph{online group learning}, in which a set of agents (judges) collectively determines a probability distribution over potential actions. After each action, the judges individually learn the outcome of the aggregation of the expert opinions, and determine their expectation of what the future reward will be when the true rewards are only revealed after some time horizon (as opposed to being revealed after each time step). We consider performance of a single step in the action sequence where the judges all use a single strategy, although they will receive independent signals about the reward function. The correspondence between the classical online learning model and the model we propose in this paper is that the weighting of the experts and their respective competences determines a probability distribution over the vote outcomes which are the set of feasible actions~\cite{blum1998line}. In this view, a single step of the classic Multiplicative Weights algorithm for minimizing regret can be seen as a variation with a single judge where the voting rule among experts is random serial dictatorship (distributed according to the weights) instead of weighted majority voting~\cite{freund1997decision,littlestone1994weighted}.
Along the same lines, our work can be seen as a contribution towards organizational control in multi-agent learning~\cite{zhang2009integrating}.

The abstract models closest to ours are those related to generalizations of the Condorcet Jury Theorem~\cite{nitzan1994general,owen1989proving}, weighting of experts, optimal committee sizes~\cite{magdon2018mathematical,revel2021optimal}, and variants of proxy voting. In a recent paper by \citet{zhang2021tracking}, voters can transfer their votes to one another, thereby increasing the weight of the recipient's vote. This model of liquid democracy for uncovering a ground truth uses transitive delegations, so weights can be transferred multiple times along a delegation chain. In the transitive delegation model~\citet{zhang2021tracking} provide a sophisticated centralized mechanism by which the optimal graph of delegations can be constructed. We contrast this directly with our results in Section \ref{section:optimal}. The restricted nature of our score assignments is closer to that of \citet{pivato2020weighted}, which considers a process of the judges choosing the experts by an election process that also weights them and assumes there are many judges and few experts. In contrast to our approach, their finding is an asymptotic convergence result  when the number of judges tends to infinity, as is common in the literature on Condorcet Jury Theorems. However, in our work we assume a small set of judges. A similarly restricted weighting process is used by \citet{abramowitz2019flexible} who do not consider the objective of tracking a ground truth and focuses on voting on many binary issues simultaneously.

In the literature on Condorcet Jury theorems and weighting experts, the inaccessibility of expert competences has been addressed in several ways. One is to use each expert's frequency of agreement with the majority vote as a proxy for their competence, and to iteratively re-weigh them over time~\cite{grofman1983thirteen,baharad2012beyond,romeijn2011learning}.
It has also been suggested to have the experts assess each other's competences, treat this matrix as a Markov chain, and use its eigenvector values as the experts' weights~\cite{grofman1983determining} in a manner reminiscent of PageRank~\cite{page1999pagerank}.
There has also been attention paid to how group accuracy depends on the size of the group and their mean competence \cite{grofman1978judgmental,grofman1984group}, the latter of which is demonstrated in part in our empirical results in Section \ref{section:10judges}.
Most recently, \citet{baharad2022one} demonstrated empirically that when the competences of experts come from a truncated normal distribution, the optimal weighting of experts does not perform much better than an equal weighting of the experts, and the difference depends on the variance of the competence distribution. This phenomenon can be observed in our Figure \ref{fig:1judge} by comparing the central row to the top row in each of the four heatmaps for the single judge case.

\subsection{Contributions}
We begin Section \ref{section:1judge} by looking at what happens when an imperfect central judge assigns weights to experts, i.e., the case where $|\judges| = n = 1$. We demonstrate the effects both from their bias and from requiring the weights to be non-negative. In Section \ref{section:optimal} we prove that, under the right conditions, aggregating expert scores from many imperfect judges, i.e., $|\judges| = n > 1$, can reproduce the optimal log-odds rule even when none of the individual judges gives the optimal weights as their scores. However, as we argue, these conditions may not be realistic in many circumstances. Finally, in Section \ref{section:10judges}, we provide empirical results where imperfect judges score the experts and these scores are aggregated into weights. Again, the judges are inaccurate in how they estimate the competences of the experts. We look at what happens when the scores judges give must be non-negative, and when we normalize the scores of each individual judge, i.e., the contribution of each individual judge to the aggregation are all equal.

\section{Central Judge}\label{section:1judge}
We start by looking at the case where a single judge must assign weights to experts $(w_{\judge\rep} = w_\rep)$, but their estimation of the experts' competences is inaccurate as the judge does not observe the ground truth, only the output of the expert aggregation.
We look at how their perception of the experts' competences influences the overall accuracy of the system.
Next, we investigate what happens when the weights that the judge can assign to the experts are bounded from below by zero.

Recall that each expert $\rep \in \reps$ has a true \emph{competence}, or probability $p_\rep$ of voting correctly, independent of all other experts. Our central judge $\judge$ also has a competence $p_\judge$. The central judge's estimate of each expert's competence $p_{\judge\rep}$ is based on how often they tend to agree with one another: $p_{\judge\rep} = p_\judge \cdot p_\rep + (1-p_\judge)(1-p_\rep)$.
We assume that our central judge, unaware of their own imperfect competence, then attempts to implement the log-odds rule by assigning each expert a weight of $w_{\judge\rep} = \log(\frac{p_{\judge\rep}}{1-p_{\judge\rep}})$.

\begin{example}\label{example:motivation}
Suppose we have 5 experts with competences $\vec{p}_\reps = (0.6, 0.6, 0.6, 0.7, 0.9)$. The optimal weights as computed by log-odds rule are approximately $\vec{w}^*_\reps = (0.41, 0.41, 0.41, 0.85, 2.2)$. Note that with these weights, the most competent expert (0.9) receives a weight (2.2) that would make them a dictator in a weighted majority vote, since their weight is greater than all other experts combined. Hence, the accuracy under the log-odds weighting is exactly 0.9. If all the experts are weighted equally, the accuracy of the weight majority vote decreases to 0.82.
A judge with competence 0.6 would assign the experts weights of approximately $\vec{w}^{0.6}_\reps = (0.08, 0.08, 0.08, 0.16, 0.323)$. This is not equivalent to the log-odds rule because the first four experts outweigh the fifth expert alone. How high of a competence would the judge need to have to assign perceived optimal weights that correspond to the log-odds rule? The judges' competence would have to be greater than $0.962$, which is higher than any of the experts.
And yet, the judge's weighting still yields an accuracy of 0.898, which is extremely close to optimal. The question is, how much is generally lost by using sub-optimal weightings derived from the perceived competences of imperfect judges?
This example is illustrated in Figure \ref{fig:example} where we graph the overall accuracy as we sweep the judge's competence between 0.0 and 1.0.

\begin{figure}
    \centering
    \includegraphics[scale = 0.3]{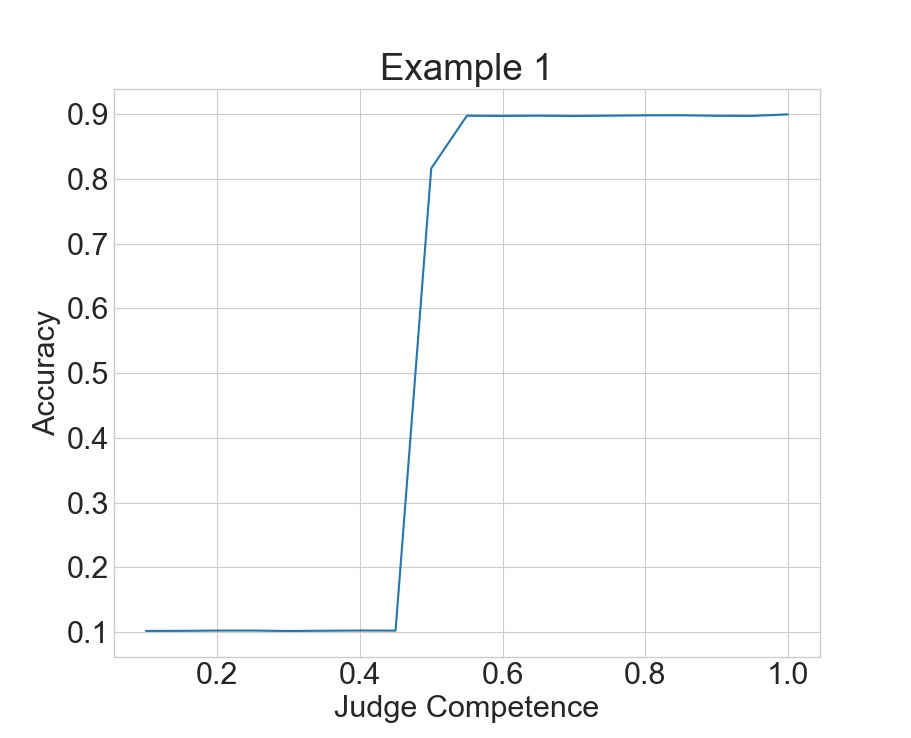}
    \caption{Accuracy of perceived optimal weightings from a single judge with the expert competences in Example \ref{example:motivation}.}
    \label{fig:example}
\end{figure}

\end{example}

To begin our empirical investigation, we simulate a setting with $m = 5$ experts and $n = 1$ judge. The $p_\judge$ value ranges from 0.1 to 1 in steps of 0.1. In the edge cases where $p_\judge = 0.5$ and $p_\judge =1.0$, the $w_{\judge\rep}$ values are all equal or correspond to the log-odds rule, respectively. The $p_\rep$ values are drawn from a truncated normal distribution $N(\mu_\reps, \sigma_\reps)$ where $\mu_\reps$ ranges from 0.1 to 0.9 in steps of 0.1, $\sigma_\reps$ ranges from 0.1 to 0.4 in steps of 0.1, and $p_\rep \in (0.1, 0.9)$ for all experts. The average accuracy of the experts' weighted majority vote is then estimated for each tuple $(p_\judge, \mu_\reps, \sigma_\reps)$. Note that the values given are how the competences were generated, not their sample mean and sample variance. This is illustrated in Figure \ref{fig:1judge}.

\begin{figure}
\begin{subfigure}{.25\textwidth}
  \centering
  \includegraphics[scale=0.25]{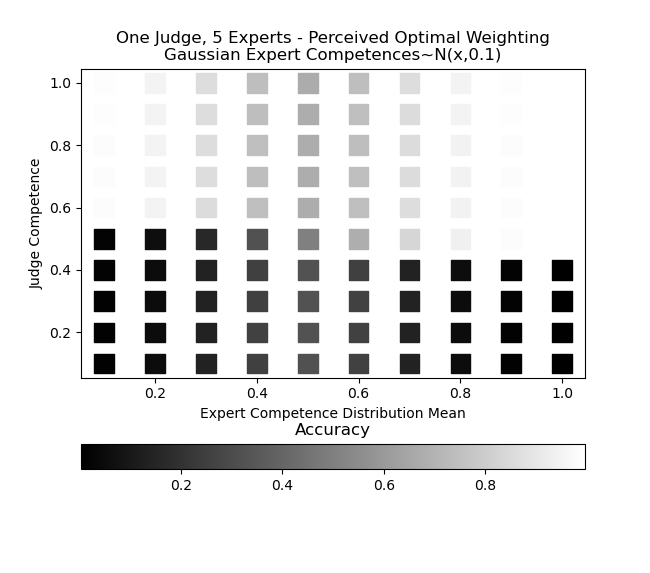}
  \label{fig:sfig1}
\end{subfigure}%
\begin{subfigure}{.25\textwidth}
  \centering
  \includegraphics[scale=0.25]{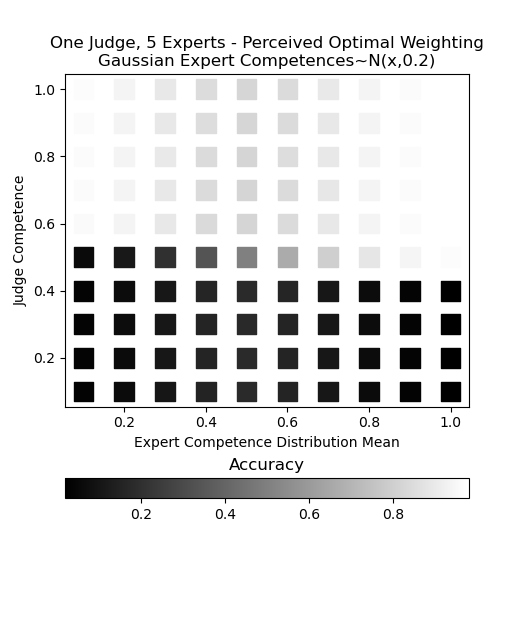}
  \label{fig:sfig2}
\end{subfigure}
\begin{subfigure}{.25\textwidth}
  \centering
  \includegraphics[scale=0.25]{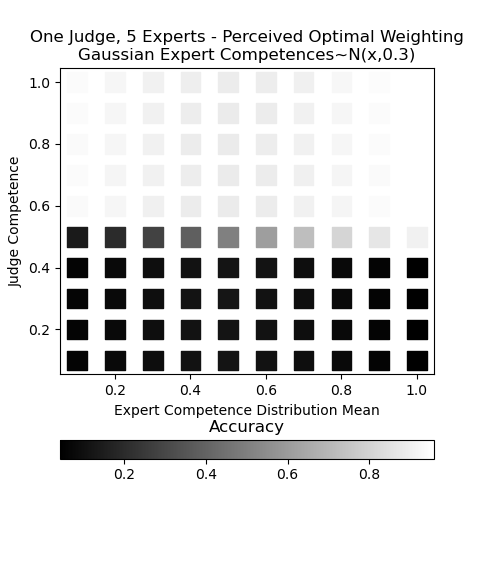}
  \label{fig:sfig3}
\end{subfigure}%
\begin{subfigure}{.25\textwidth}
  \centering
  \includegraphics[scale=0.25]{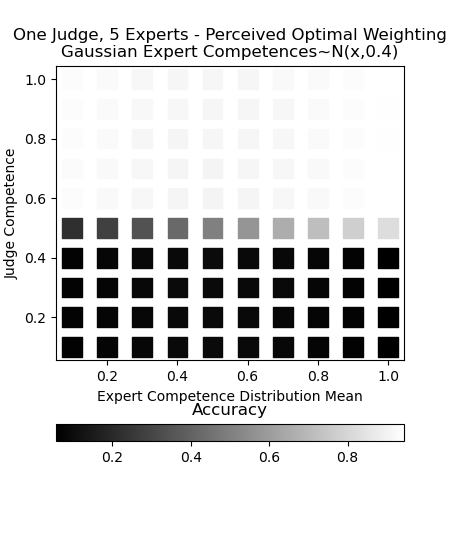}
  \label{fig:sfig4}
\end{subfigure}
\caption{Heatmaps of accuracy for single judge competence for Gaussian distributions of 5 expert competences with variances \{0.1, 0.2, 0.3, 0.4\}.}
\label{fig:1judge}
\end{figure}

% \begin{figure}[h!]
%     \centering
%     \includegraphics[scale=0.15]{}
%     \caption{Single voter Perceived Optimal Heatmap (colored scatterplot) with mean expert competence on the x-axis, voter competence on the y-axis, and color represents the expected accuracy}
%     \label{fig:my_label}
% \end{figure}

The accuracy increases smoothly as $\mu_\reps$ increases, but there is a marked transition where $p_\judge$ goes from competence below 0.5 to above 0.5. The higher $\sigma_\reps$ the more marked the transition. Intuitively, higher $\sigma_\reps$ improves accuracy when the expert weights are `closer' to optimal $(p_\judge = 1)$ and further from equality $(p_\judge = 0.5)$.

\section{Optimal Distributed Weighting}\label{section:optimal}
Moving to the multi-agent setting, we now turn our attention to the potential for improved accuracy when there are multiple judges, i.e., $|\judges| = n > 1$. 
Recall that $\reps$ is a set of $m$ experts indexed by $\rep \in \reps$.
Every expert $\rep \in \reps$ has a competence $p_\rep \in (0,1)$ reflecting their probability of voting correctly, independently of all other experts and judges.
The rule that maximizes the probability of selecting the correct alternative is the log-odds rule in which every expert is assigned a weight equal to the log-odds of their competence: $w^*_\rep = \log \left( \frac{p_\rep}{1 - p_\rep} \right)$~\cite{shapley1984optimizing,nitzan1982optimal}.
However, in our work we do not assume that the experts' competences are known.
We cannot compute $w^*_\rep$ directly if we do not know $p_\rep$.

Each judge $\judge$ assigns each expert $\rep$ a score that they believe is their Bayesian optimal weight $w_{\judge\rep} = \log \left( \frac{p_{\judge\rep}}{1 - p_{\judge\rep}} \right)$.
The average (\emph{arithmetic mean}) of these scores becomes the weight of the expert: $w_\rep = \frac{1}{n} \sum\limits_\judge w_{\judge\rep}$.

We prove that when the geometric mean of the judges' estimates of experts competence odds are the experts true competence odds, i.e., $\left( \frac{p_\rep}{1-p_\rep} \right) = \left(\prod\limits_\judge \frac{p_{\judge\rep}}{1-p_{\judge\rep}} \right)^\frac{1}{n}$, all experts are assigned their Bayesian optimal weights $w_\rep = w^*_\rep$. Remarkably, this does not require any of the individual judges to know the experts' true competences.

\begin{theorem}\label{theorem:optimal}
If each judge assigns each expert a score equal to the log-odds of their perceived competence, and the geometric mean of the judges' estimates of each expert's competence odds is the expert's true odds, then the weighted majority rule using judges' average scores to weight each expert is exactly the optimal log-odds rule.
\end{theorem}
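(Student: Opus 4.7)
The plan is to recognize that the theorem is essentially a one-line consequence of the duality between the geometric and arithmetic means under the logarithm. Given the hypothesis $\frac{p_\rep}{1-p_\rep} = \left(\prod_\judge \frac{p_{\judge\rep}}{1-p_{\judge\rep}}\right)^{1/n}$ for every expert $\rep \in \reps$, I would take logarithms of both sides to obtain
\[
\log\!\left(\frac{p_\rep}{1-p_\rep}\right) = \frac{1}{n}\sum_\judge \log\!\left(\frac{p_{\judge\rep}}{1-p_{\judge\rep}}\right).
\]
The right-hand side is, by definition of the aggregation, the expert's aggregated weight $w_\rep = \frac{1}{n}\sum_\judge w_{\judge\rep}$, while the left-hand side is exactly the optimal Bayesian log-odds weight $w^*_\rep$.

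Once the pointwise identity $w_\rep = w^*_\rep$ is established for every $\rep \in \reps$, the conclusion is immediate: the weighted majority rule defined by the aggregated weights $\{w_\rep\}_{\rep \in \reps}$ is literally the same rule as the one defined by $\{w^*_\rep\}_{\rep \in \reps}$, so by the cited optimality result of Shapley--Grofman and Nitzan--Paroush it maximizes the probability of a correct outcome. I would also briefly note that weighted majority rules are invariant under positive rescaling of the entire weight vector, which is why the $1/n$ averaging constant plays no role in the resulting rule and which would make the argument robust to a common multiplicative mismatch in the hypothesis.

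There is no substantive technical obstacle here; the proof reduces to the identity $\log \prod_\judge x_\judge = \sum_\judge \log x_\judge$. What deserves emphasis in the writeup is conceptual rather than computational: the hypothesis demands only that the judges' perceived competence odds have the correct geometric mean, not that any particular judge's estimate be correct, so the bias of each individual judge can cancel across an ensemble to recover the optimal weighting. Accordingly, I would structure the writeup as (i) restate the hypothesis as an equation, (ii) take logs and swap the product with a sum, (iii) identify the two sides with $w_\rep$ and $w^*_\rep$ respectively, and (iv) invoke the known optimality of the log-odds rule, with a short remark on scale-invariance to close.
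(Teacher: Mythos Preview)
Your proposal is correct and follows essentially the same approach as the paper: both arguments reduce to the identity $\log\prod_\judge x_\judge = \sum_\judge \log x_\judge$, converting the geometric-mean hypothesis on the odds into the arithmetic-mean identity $w_\rep = \frac{1}{n}\sum_\judge w_{\judge\rep} = w^*_\rep$. The only cosmetic difference is that the paper starts from the definition of $w_\rep$ and simplifies toward the hypothesis, whereas you start from the hypothesis and take logs; your added remarks on invoking the Shapley--Grofman/Nitzan--Paroush optimality result and on scale-invariance are reasonable embellishments but not present in the paper's proof.
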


\begin{proof}
We begin by assuming that the judges give each expert a score of $w_{\judge\rep} = \log \left( \frac{p_{\judge\rep}}{1-p_{\judge\rep}} \right)$, corresponding to what they believe the optimal weight of that expert to be, and we take the average as the weight of the expert. 

\begin{align}
    w_\rep = \frac{1}{n} \sum\limits_\judge w_{\judge\rep} & = \frac{1}{n} \sum\limits_\judge \log \left( \frac{p_{\judge\rep}}{1-p_{\judge\rep}} \right)\\
    w_\rep & = \frac{1}{n} \log \left( \prod_\judge \frac{p_{\judge\rep}}{1-p_{\judge\rep}} \right)\\
    w_\rep & = \log \left( \left( \prod_\judge \frac{p_{\judge\rep}}{1-p_{\judge\rep}} \right)^\frac{1}{n} \right)
\end{align}

Now we assume the geometric mean of judges' estimates of the experts' competence odds is correct. We assume $\left( \frac{p_\rep}{1-p_\rep} \right) = \left(\prod\limits_\judge \frac{p_{\judge\rep}}{1-p_{\judge\rep}} \right)^\frac{1}{n}$. Therefore, 

\begin{align}
    w_\rep & = \log \left( \frac{p_\rep}{1-p_\rep} \right) = w^*_\rep
\end{align}

\end{proof}

\begin{corollary}\label{corollary:optimal}
If the geometric mean of judge estimates of competence odds is off by some multiplicative factor $\alpha$ for some expert, then the error of that expert's weight is only $\log(\alpha)$.
\end{corollary}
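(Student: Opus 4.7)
The plan is to piggy-back directly on the algebraic chain already developed in the proof of \autoref{theorem:optimal}. That derivation stops at the identity $w_\rep = \log\left(\left(\prod_\judge \frac{p_{\judge\rep}}{1-p_{\judge\rep}}\right)^{1/n}\right)$, i.e.\ the aggregate weight is the logarithm of the geometric mean of the judges' perceived odds. The corollary's hypothesis is exactly a statement about that geometric mean, so no further restructuring of the aggregation is needed; I would simply substitute the assumed relation and read off the resulting error term.

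Concretely, I would first fix a precise meaning for ``off by a multiplicative factor $\alpha$,'' namely $\left(\prod_\judge \frac{p_{\judge\rep}}{1-p_{\judge\rep}}\right)^{1/n} = \alpha \cdot \frac{p_\rep}{1-p_\rep}$. Plugging this into the expression above gives
\begin{align}
    w_\rep &= \log\!\left( \alpha \cdot \frac{p_\rep}{1-p_\rep} \right) \\
    &= \log(\alpha) + \log\!\left( \frac{p_\rep}{1-p_\rep} \right) \\
    &= \log(\alpha) + w^*_\rep,
\end{align}
where the middle step is just the multiplicative-to-additive property of the logarithm and the last step invokes the definition of the optimal log-odds weight. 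Rearranging yields $w_\rep - w^*_\rep = \log(\alpha)$, which is the claimed error bound. If one wants a magnitude statement, taking absolute values gives $|w_\rep - w^*_\rep| = |\log(\alpha)|$, and note that $\alpha = 1$ recovers \autoref{theorem:optimal} as the zero-error case.

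There is essentially no technical obstacle here; the result is a one-line consequence of \autoref{theorem:optimal}'s identity combined with $\log(ab) = \log(a) + \log(b)$. The only thing worth being careful about is emphasizing that the corollary is a per-expert statement: the geometric mean may be accurate for some experts and inaccurate for others, and only the weights of the latter are perturbed, each by its own additive $\log(\alpha_\rep)$ term. This framing makes clear that the log-odds aggregation scheme is robust in the sense that multiplicative distortions in the judges' aggregate odds translate into only additive (and logarithmically damped) distortions in the weights fed into the weighted majority rule.
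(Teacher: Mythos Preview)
Your proposal is correct and mirrors the paper's own proof almost exactly: both substitute the assumption $\left(\prod_\judge \frac{p_{\judge\rep}}{1-p_{\judge\rep}}\right)^{1/n} = \alpha \cdot \frac{p_\rep}{1-p_\rep}$ into the identity $w_\rep = \log\!\left(\left(\prod_\judge \frac{p_{\judge\rep}}{1-p_{\judge\rep}}\right)^{1/n}\right)$ from \autoref{theorem:optimal} and read off $w_\rep = w^*_\rep + \log(\alpha)$. Your additional remarks on the per-expert nature of the perturbation and the $\alpha=1$ sanity check are accurate but go slightly beyond what the paper records.
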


\begin{proof}
In the proof above, assume instead that $\alpha \left( \frac{p_\rep}{1-p_\rep} \right) = \left(\prod\limits_\judge \frac{p_{\judge\rep}}{1-p_{\judge\rep}} \right)^\frac{1}{n}$. Then,

\begin{align}
    w_\rep & = \log \left( \alpha \cdot \frac{p_\rep}{1-p_\rep} \right) = w^*_\rep + \log(\alpha)
\end{align}

\end{proof}

\autoref{theorem:optimal} and \autoref{corollary:optimal} provide us with a starting point for our investigation of distributed weighting of experts. Together they state that if all judges individually form personal estimates of the experts' competences, then so long as their collective estimate is reasonably accurate -- the geometric mean of the odds implied by the weights is within a small multiplicative factor of the true odds -- the weights they assign to the experts by averaging their scores will be ``close'' to the Bayesian optimal weights.
This corollary is promising because any set of weights defines a collection of subsets, or ``winning coalitions", such that the outcome is guaranteed if all experts in the subset vote the same way. Altering the weights only changes the weighted majority rule if the set of winning coalitions changes. 

However, there are clear shortcomings to this result.
The first is that for the result to hold judges must be able to assign negative scores to experts, which may not be desirable in many circumstances. The second issue is that if judges express complete certainty, by privately estimating the competence of an expert as either 1 or 0 (always correct or always incorrect), then the expert's score is undefined. The third issue is that the scores judges are able to assign can be arbitrarily large or small even when they are defined. There is no bound on how large a positive or negative score could be, so a single judge assigning huge scores could completely determine the outcome.
% If the experts also get a vote, in that they get to assign some weight to themselves, then clearly this system is untenable.

These shortcomings related to the practicality of the judges reporting space motivates the study of limits on the judges' scores. There are a few ways to bound the scores that judges can assign to experts that address these issue. 
% %
% The first is to set bounds on the weight a judge can assign to any expert, say restricting the weights to being in the interval [0,1] or an integer score between 1 and 100.
The simplest is by ensuring ``one person, one vote," i.e., normalization, so that each judge gets a budget of points that they can distribute among the experts to construct their scores: $\forall i \sum\limits_{j \in \reps} w_{\judge\rep} = 1$.

\section{Distributed Weighting}\label{section:10judges}
We now turn our attention back to the empirical study of the distributed weighting of experts with $n = 10$ judges rather than $n = 1$. Based on the results of Section \ref{section:1judge}, we consider a low variance (0.1) and high variance (0.4) condition for the competences of both experts and judges. For each condition, we look at the loss of accuracy when scores given by the judges are restricted to being non-negative and when they must be normalized.

Figure \ref{fig:10judges_unrestricted} illustrates the case of unrestricted scores for 50k trials. We see a pattern very similar to what we observed with a single judge in Figure \ref{fig:1judge}. Notably, when $\sigma_\reps$ is high but $\sigma_\judges$ is low, we see the marked phase transition, where any competent judge $(p_\judge > 0.5)$ seems to yield scores similar to the optimal weights (Figure \ref{fig:10judges_unrestricted} bottom left). However, when both sets of agents are in the high variance condition (Figure \ref{fig:10judges_unrestricted} bottom right), $\mu_\reps$ seems to hardly matter at all in comparison to $\mu_\judges$. When $\sigma_\reps$ is low (top row), the effects from greater expert competence are more pronounced, particularly when $\sigma_\judges$ is low too.

\begin{figure}[h!]
\begin{subfigure}{.25\textwidth}
  \centering
  \includegraphics[scale=0.25]{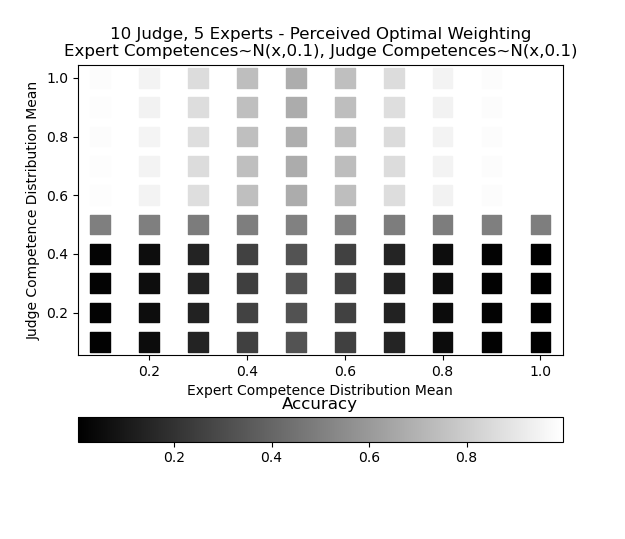}
\end{subfigure}%
\begin{subfigure}{.25\textwidth}
  \centering
  \includegraphics[scale=0.25]{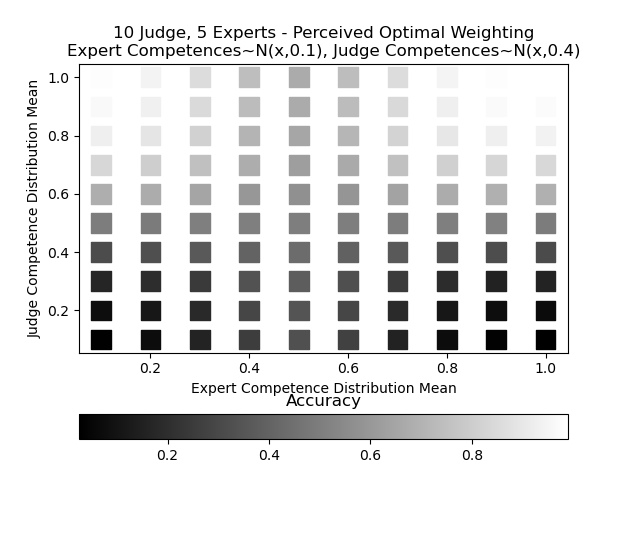}
\end{subfigure}
\begin{subfigure}{.25\textwidth}
  \centering
  \includegraphics[scale=0.25]{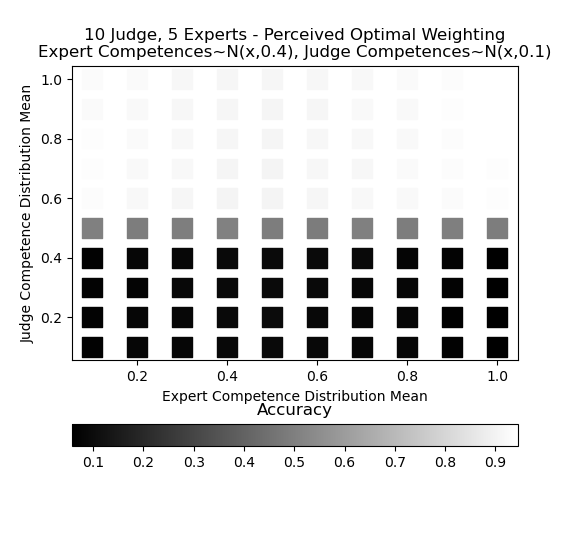}
\end{subfigure}%
\begin{subfigure}{.25\textwidth}
  \centering
  \includegraphics[scale=0.25]{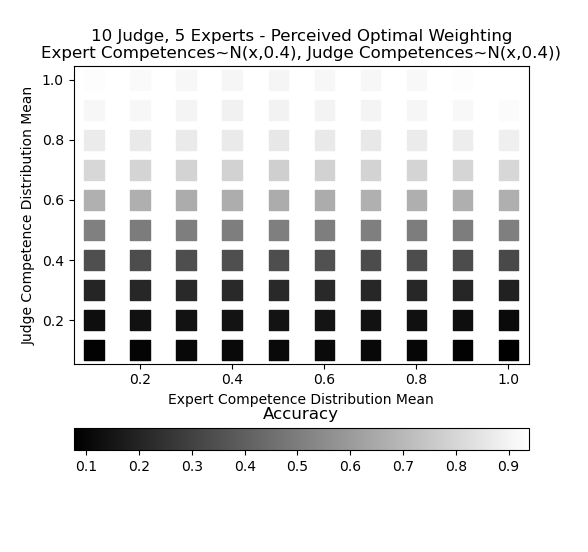}
\end{subfigure}
\caption{Heatmaps of accuracy for 10 judges and 5 experts with competence variances in $\{0.1, 0.4\}$.}
\label{fig:10judges_unrestricted}
\end{figure}

Restricting the weights to be non-negative has a significant, observable impact on accuracy, as shown in Figure \ref{fig:nneg}. When $\sigma_\reps$ is low and weights are non-negative (\ref{fig:nneg} top row), the effect of changes in $\mu_\judges$ is dwarfed by the impact of changes in the expert mean competence. 

Surprisingly, normalizing the weights from each judge causes very little loss in accuracy compared to restricting the weights to be non-negative. This holds true in all four \{high, low\} $\times$ \{high, low\} variance conditions in Figure \ref{fig:10judge_normalized}.

\begin{figure}[h!]
\begin{subfigure}{.25\textwidth}
  \centering
  \includegraphics[scale=0.25]{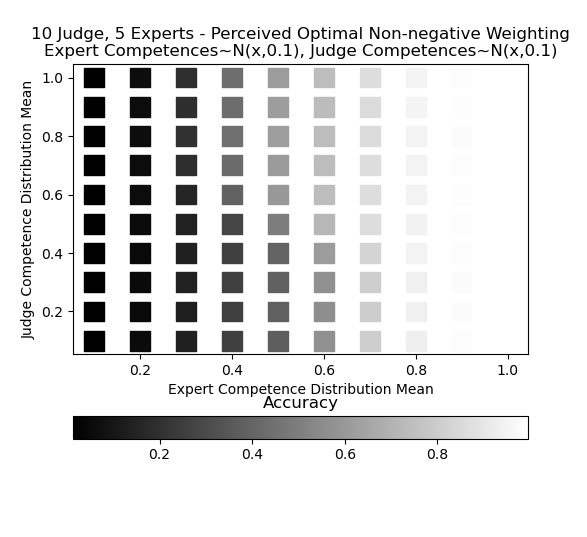}
\end{subfigure}%
\begin{subfigure}{.25\textwidth}
  \centering
  \includegraphics[scale=0.25]{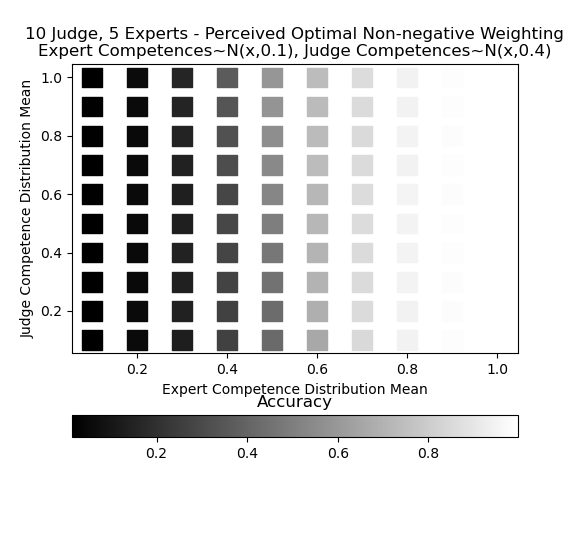}
\end{subfigure}
\begin{subfigure}{.25\textwidth}
  \centering
  \includegraphics[scale=0.25]{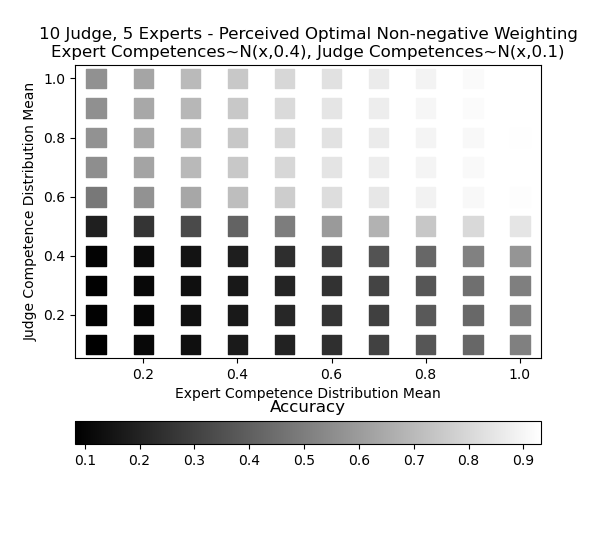}
\end{subfigure}%
\begin{subfigure}{.25\textwidth}
  \centering
  \includegraphics[scale=0.25]{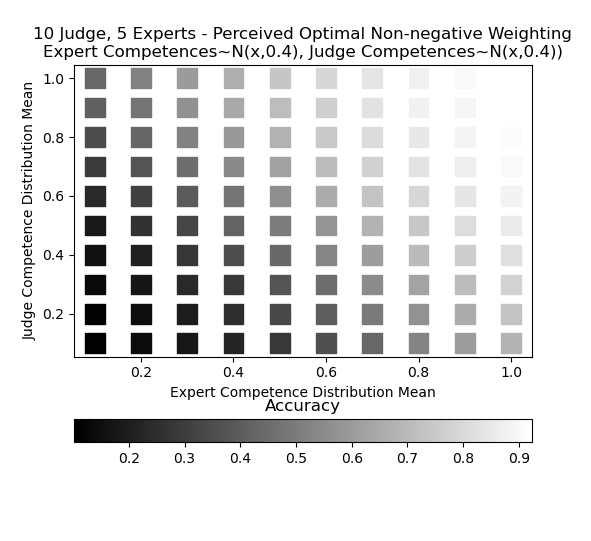}
\end{subfigure}
\caption{Heatmaps of accuracy for 10 judges and 5 experts with competence variances in $\{0.1, 0.4\}$ and non-negative weights.}
\label{fig:nneg}
\end{figure}

\begin{figure}[h!]
\begin{subfigure}{.25\textwidth}
  \centering
  \includegraphics[scale=0.25]{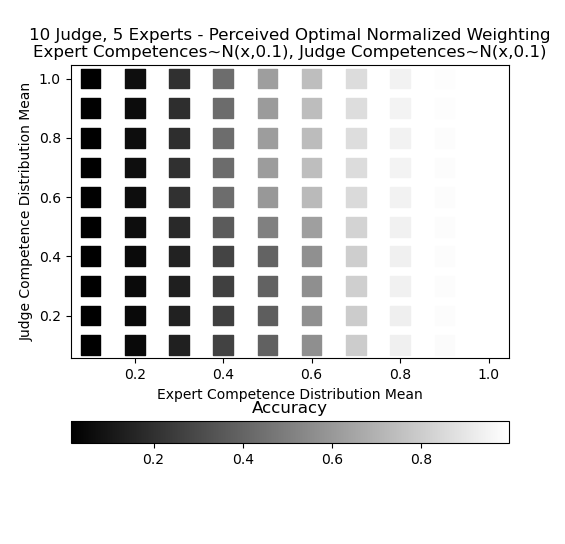}
\end{subfigure}%
\begin{subfigure}{.25\textwidth}
  \centering
  \includegraphics[scale=0.25]{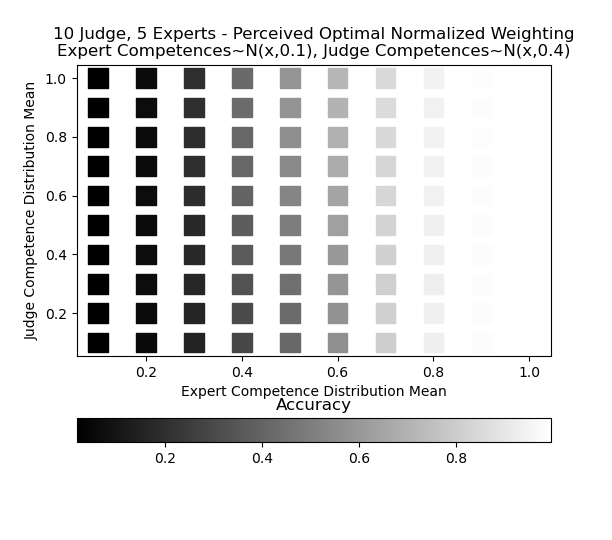}
\end{subfigure}
\begin{subfigure}{.25\textwidth}
  \centering
  \includegraphics[scale=0.25]{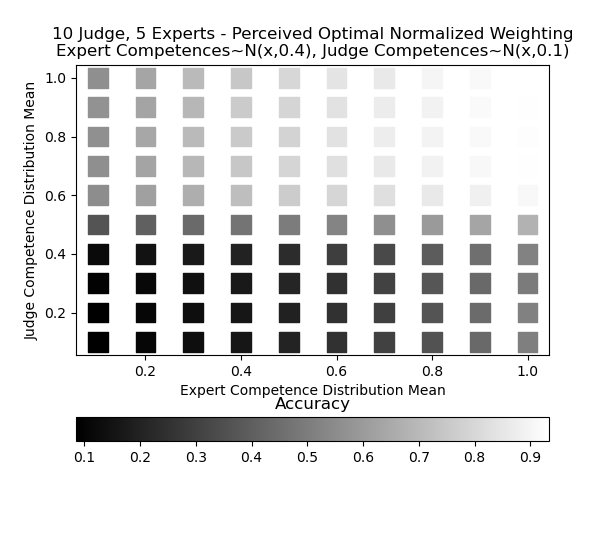}
\end{subfigure}%
\begin{subfigure}{.25\textwidth}
  \centering
  \includegraphics[scale=0.25]{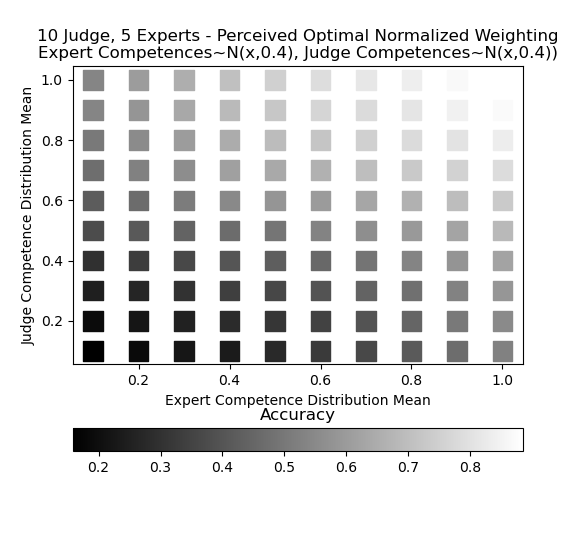}
\end{subfigure}
\caption{Heatmaps of accuracy for 10 judges and 5 experts with competence variances in $\{0.1, 0.4\}$ and normalized weights.}
\label{fig:10judge_normalized}
\end{figure}

\pagebreak
%%%%%%%%%%%%%%%%%%%%%%%%%%%%%%%%%%%%%%

\section{Discussion}
Building on the literature of weighting experts, we have introduced a model in which a set of judges assesses the competence of a set of experts, and weights them accordingly in a distributed fashion before the experts use a weighted majority vote to make a decision. When the scores from independent judges are averaged to give each expert their weight, we have given sufficient conditions for the weights to be optimal even when no individual judge knows the true competence of any expert or the ground truth. Our empirical results show (1) judges' perception of the experts' competences leads to sub-optimal weightings that produce lower accuracy but compete well with the optimal log-odds rule in many cases, (2) the variance in expert and judge competences determines the relative effect sizes of changes in the mean competences, and (3) requiring weights to be non-negative leads to a moderate loss of accuracy, but normalizing the weights causes very little additional loss.

\section{Future Work}
We leave many avenues open to further exploration. There are many alternative ways in which judges might estimate experts' competences and assign their weights, and different distributions of competences may be relevant to different applications. We did not begin to address here any correlation between the competences, weights, or votes of the judges and experts~\cite{shapley1984optimizing}.
Following the sensor example, we would also like to assess the performance of these distributed judge-expert systems when all judges are not always available; similar to the delegation rate in some delegative voting models~\cite{abramowitz2019flexible}.
Also in line with the voting literature would be the consideration of multiple binary issues simultaneously~\cite{baharad2011distilling, grofman1984group}.

Characterizing the equilibria when judges and experts are strategic, in the manner of \citet{zhang2021tracking}, is another promising direction which would be complicated by an understanding of how judges can learn to optimize their weightings over time given their signals. We hope that our results are seen as a small step towards a deeper understanding of online multi-agent learning.

Another line of thought is in the design of judge-expert systems with resource constraints. For instance, if one has a fixed number of agents and knows something about the distribution of their competences, how does one optimally divide them into judges and experts? And how does the distributed weighting of experts compete with models of weighting experts based on their voting histories or having the experts all weight each other~\cite{grofman1983determining}?

\section*{Acknowledgements}
Nicholas Mattei was supported by NSF Awards IIS-RI-2007955, IIS-III-2107505, and IIS-RI-2134857, as well as an IBM Faculty Award and a Google Research Scholar Award.

\bibliographystyle{ACM-Reference-Format}
\bibliography{main}

\end{document}